\icmltitlerunning{Elastic Graph Neural Networks}
\newcommand{\cut}[1]{{}}
\newcommand{\tA}{{\tilde{\vA}}}
\newcommand{\tL}{{\tilde{\vL}}}
\newcommand{\hA}{{\hat{\vA}}}
\newcommand{\hD}{{\hat{\vD}}}
\newcommand{\tDelta}{{\tilde{\Delta}}}
\newcommand{\vf}{{\mathbf{f}}}
\newcommand{\vm}{{\mathbf{m}}}
\newcommand{\vx}{{\mathbf{x}}}
\newcommand{\vA}{{\mathbf{A}}}
\newcommand{\vB}{{\mathbf{B}}}
\newcommand{\vD}{{\mathbf{D}}}
\newcommand{\vF}{{\mathbf{F}}}
\newcommand{\vI}{{\mathbf{I}}}
\newcommand{\vL}{{\mathbf{L}}}
\newcommand{\vU}{{\mathbf{U}}}
\newcommand{\vV}{{\mathbf{V}}}
\newcommand{\vX}{{\mathbf{X}}}
\newcommand{\vY}{{\mathbf{Y}}}
\newcommand{\vZ}{{\mathbf{Z}}}
\newcommand{\cE}{{\mathcal{E}}}
\newcommand{\cG}{{\mathcal{G}}}
\newcommand{\cN}{{\mathcal{N}}}
\newcommand{\RR}{\mathbb{R}}
\newcommand{\sign}{\mathrm{sign}}
\newcommand{\vzero}{\mathbf{0}}
\newcommand{\tr}{{\mathrm{tr}}} 
\newcommand{\prox}{\mathbf{prox}}
\let\@@span\span
\def\sp@n{\@@span\omit\advance\@multicnt\m@ne}
\DeclareMathOperator*{\argmin}{arg\,min}
\newcommand{\bc}{\begin{center}}
\newcommand{\ec}{\end{center}}
\newcommand{\bdm}{\begin{displaymath}}
\newcommand{\edm}{\end{displaymath}}
\newcommand{\beq}{\begin{equation}}
\newcommand{\eeq}{\end{equation}}
\newcommand{\bfl}{\begin{flushleft}}
\newcommand{\efl}{\end{flushleft}}
\newcommand{\bt}{\begin{tabbing}}
\newcommand{\et}{\end{tabbing}}
\newcommand{\beqn}{\begin{align}}
\newcommand{\eeqn}{\end{align}}
\newcommand{\beqs}{\begin{align*}} 
\newcommand{\eeqs}{\end{align*}}  
\newtheorem{theorem}{Theorem}
\newtheorem{remark}{Remark}
\begin{document}

\twocolumn[
\icmltitle{Elastic Graph Neural Networks}



\icmlsetsymbol{equal}{*}

\begin{icmlauthorlist}
\icmlauthor{Xiaorui Liu}{cse,equal}
\icmlauthor{Wei Jin}{cse,equal}
\icmlauthor{Yao Ma}{cse}
\icmlauthor{Yaxin Li}{cse}
\icmlauthor{Hua Liu}{shandong}
\icmlauthor{Yiqi Wang}{cse}
\icmlauthor{Ming Yan}{cmse}
\icmlauthor{Jiliang Tang}{cse}
\end{icmlauthorlist}

\icmlaffiliation{cse}{Department of Computer Science and Engineering, Michigan State University, USA}
\icmlaffiliation{cmse}{Department of Computational Mathematics, Science and Engineering, Michigan State University, USA}
\icmlaffiliation{shandong}{School of Mathematics, Shandong University, China}

\icmlcorrespondingauthor{Xiaorui Liu}{xiaorui@msu.com}

\icmlkeywords{Machine Learning, ICML}

\vskip 0.3in
]



\printAffiliationsAndNotice{\icmlEqualContribution} 

\begin{abstract}

While many existing graph neural networks (GNNs) have been proven to perform $\ell_2$-based graph smoothing that enforces smoothness globally, in this work we aim to further enhance the local smoothness adaptivity of GNNs via $\ell_1$-based graph smoothing. As a result, we introduce a family of GNNs (Elastic GNNs) based on $\ell_1$ and $\ell_2$-based graph smoothing. In particular, we propose a novel and general message passing scheme into GNNs. This message passing algorithm is not only friendly to back-propagation training but also achieves the desired smoothing properties with a theoretical convergence guarantee. Experiments on semi-supervised learning tasks demonstrate that the proposed Elastic GNNs obtain better adaptivity on benchmark datasets and are significantly robust to graph adversarial attacks. The implementation of Elastic GNNs is available at \url{https://github.com/lxiaorui/ElasticGNN}.

\end{abstract}

\section{Introduction}

Graph neural networks (GNNs) generalize traditional deep neural networks (DNNs) from regular grids, such as image, video, and text, to irregular data such as social networks, transportation networks, and biological networks, which are typically denoted as graphs~\citep{defferrard2016convolutional, kipf2016semi}. 
One popular such generalization is the neural message passing framework~\citep{gilmer2017neural}:
\begin{align}
    \vx_u^{(k+1)} = \text{UPDATE}^{(k)} \big ( \vx_u^{(k)},  \vm_{\mathcal{N}(u)}^{(k)} \big )
\end{align}
where $\vx_u^{(k)} \in \mathbb{R}^{d}$ denotes the feature vector of node $u$ in $k$-th iteration of message passing and $\vm_{\mathcal{N}(u)}^{(k)}$ is the message aggregated from $u$'s neighborhood $\mathcal{N}(u)$. The specific architecture design has been motivated from spectral domain~\citep{kipf2016semi, defferrard2016convolutional} and spatial domain~\citep{hamilton2017inductive, velivckovic2017graph, scarselli2008graph, gilmer2017neural}. Recent study~\citep{ma2020unified} has proven that the message passing schemes in numerous popular GNNs, such as GCN, GAT, PPNP, and APPNP, 
intrinsically perform the
$\ell_2$-based graph smoothing to the graph signal, and they can be considered as solving the graph signal denoising problem: 
\begin{align}
    \argmin_{\vF} \mathcal{L}(\vF) := \|\vF-\vX_{\text{in}} \|_F^2 + \lambda ~\text{tr}(\vF^{\top}\vL\vF),
\end{align}
where $\vX_{\text{in}} \in \mathbb{R}^{n\times d}$ is the input signal and $\vL \in \mathbb{R}^{n\times n} $ is the graph Laplacian matrix encoding the graph structure. The first term guides $\vF$ to be close to input signal $\vX_{\text{in}}$, while the second term enforces global smoothness to the filtered signal $\vF$. The resulted message passing schemes can be derived by different optimization solvers, and they typically entail the aggregation of node features from neighboring nodes, which intuitively coincides with the cluster or consistency assumption that neighboring nodes should be similar~\citep{zhu2002learning, zhou2004learning}. While existing GNNs are prominently driven by $\ell_2$-based graph smoothing, $\ell_2$-based methods enforce smoothness globally and the level of smoothness is usually shared across the whole graph. However, the level of smoothness over different regions of the graph can be different. 
For instance, node features or labels can change significantly between clusters but smoothly within the cluster~\citep{zhu2005semi}. Therefore, it is desired to enhance the local smoothness adaptivity of GNNs.

Motivated by the idea of trend filtering~\citep{kim2009ell_1, tibshirani2014adaptive, wang2016trend}, we aim to achieve the goal via $\ell_1$-based graph smoothing. Intuitively, compared with $\ell_2$-based methods, $\ell_1$-based methods penalize large values less and thus preserve discontinuity or non-smooth signal better. Theoretically, $\ell_1$-based methods tend to promote signal sparsity to trade for discontinuity~\citep{rudin1992nonlinear, tibshirani2005sparsity, sharpnack2012sparsistency}. Owning to these advantages, trend filtering~\citep{tibshirani2014adaptive} and graph trend filter~\citep{wang2016trend, varma2019vector} demonstrate that $\ell_1$-based graph smoothing can adapt to inhomogenous level of smoothness of signals and yield estimators with k-th order piecewise polynomial functions, such as piecewise constant, linear and quadratic functions, depending on the order of the graph difference operator. While $\ell_1$-based methods exhibit various appealing properties and have been extensively studied in different domains such as signal processing~\citep{elad2010sparse}, statistics and machine learning~\citep{lassobook2015}, it has rarely been investigated in the design of GNNs. In this work, we attempt to bridge this gap and enhance the local smoothnesss adaptivity of GNNs via $\ell_1$-based graph smoothing.

Incorporating $\ell_1$-based graph smoothing in the design of GNNs faces tremendous challenges. First, since the message passing schemes in GNNs can be derived from the optimization iteration of the graph signal denoising problem, a fast, efficient and scalable optimization solver is desired. Unfortunately, to solve the associated optimization problem involving $\ell_1$ norm is challenging since the objective function is composed by smooth and non-smooth components and the decision variable is further coupled by the discrete graph difference operator. Second, to integrate the derived messaging passing scheme into GNNs, it has to be composed by simple operations that are friendly to the back-propagation training of the whole GNNs. Third, it requires an appropriate normalization step to deal with diverse node degrees, which is often overlooked by existing graph total variation and graph trend filtering methods. Our attempt to address these challenges leads to a family of novel GNNs, i.e., Elastic GNNs. Our key contributions can be summarized as follows:
\vspace{-0.1in}
\begin{itemize}
\setlength\itemsep{0em}
\item We introduce $\ell_1$-based graph smoothing in the design of GNNs to further enhance the local smoothness adaptivity, for the first time;

\item We derive a novel and general message passing scheme, i.e., Elastic Message Passing (EMP), and develop a family of GNN architectures, i.e., Elastic GNNs, by integrating the proposed message passing scheme into deep neural nets; 

\item Extensive experiments demonstrate that Elastic GNNs obtain better adaptivity on various real-world datasets, and they are significantly robust to graph adversarial attacks. The study on different variants of Elastic GNNs suggests that $\ell_1$ and $\ell_2$-based graph smoothing are complementary and Elastic GNNs are more versatile.

\end{itemize}

\section{Preliminary}

We use bold upper-case letters such as $\vX$ to denote matrices and bold lower-case letters such as $\vx$ to define vectors. 
Given a matrix $\vX\in \mathbb{R}^{n\times d}$, we use $\vX_i$ to denote its $i$-th row and $\vX_{ij}$ to denote its element in $i$-th row and $j$-th column. We define the Frobenius norm, $\ell_1$ norm, and  $\ell_{21}$ norm of matrix $\vX$ as $\|\vX\|_F=\sqrt{\sum_{ij} \vX_{ij}^2}$, $\|\vX\|_1=\sum_{ij} |\vX_{ij}|$, and $\|\vX\|_{21}=\sum_i \|\vX_{i}\|_2 = \sum_i \sqrt{\sum_j \vX_{ij}^2}$, respectively. We define $\|\vX\|_2=\sigma_{\max}(\vX)$ where $\sigma_{\max}(\vX)$ is the largest singular value of $\vX$.
Given two matrices $\vX, \vY \in \mathbb{R}^{n\times d}$, we define the inner product as $\langle \vX, \vY \rangle = \text{tr}(\vX^{\top}\vY)$.

Let $\mathcal{G}=\{\mathcal{V}, \mathcal{E}\}$ be a graph with the node set $\mathcal{V}=\{v_1, \dots, v_n\}$ 
and the undirected edge set $\mathcal{E}=\{e_1, \dots, e_m \}$. We use $\cN(v_i)$ to denote the neighboring nodes of node $v_i$, including $v_i$ itself. Suppose that each node is associated with a $d$-dimensional feature vector, and the features for all nodes are denoted as $\vX_{\text{fea}} \in \mathbb{R}^{n\times d}$. 
The graph structure $\mathcal{G}$ can be represented as an adjacent matrix $\vA\in \mathbb{R}^{n\times n}$, where $\vA_{ij}=1$ when there exists an edge between nodes $v_i$ and $v_j$. The graph Laplacian matrix is defined as $\vL=\vD-\vA$, where $\vD$ is the diagonal degree matrix. 
Let $\Delta \in \{-1, 0, 1\}^{m\times n}$ be the oriented incident matrix, which contains one row for each edge. If $e_{\ell}=(i,j)$, then $\Delta$ has $\ell$-th row as:
\vspace{-0.05in}
$$\Delta_{\ell} = (0, \dots, \underbrace{-1}_{i}, \dots, \underbrace{1}_{j}, \dots, 0)
\vspace{-0.05in}
$$
where the edge orientation can be arbitrary.
Note that the incident matrix and unnormalized Laplacian matrix have the equivalence $\vL=\Delta^{\top}\Delta$. Next, we briefly introduce some necessary background about the graph signal denoising perspective of GNNs and the graph trend filtering methods.

\subsection{GNNs as Graph Signal Denoising}

It is evident from recent work~\citep{ma2020unified} that many popular GNNs can be uniformly understood as graph signal denoising with Laplacian smoothing regularization. Here we briefly describe several representative examples.

\textbf{GCN.}
The message passing scheme in Graph Convolutional Networks (GCN)~\citep{kipf2016semi}, 
$$\vX_{\text{out}} = \tilde \vA \vX_{\text{in}}, $$ 
is equivalent to one gradient descent step to minimize $\tr (\vF^{\top} (\vI-\tA) \vF)$ with the initial $\vF=\vX_{\text{in}}$ and stepsize $1/2$. Here $\tA = {\hD}^{-\frac{1}{2}} \hA {\hD}^{-\frac{1}{2}}$ with $\hA=\vA+\vI$ being the adjacent matrix with self-loop,  whose degree matrix is $\hD$.

\textbf{PPNP \& APPNP.} 
The message passing scheme in PPNP and APPNP~\citep{klicpera2018predict} follow the aggregation rules
\begin{align}
    \vX_{\text{out}} = \alpha \big (\vI - (1-\alpha) \tA \big )^{-1} \vX_{\text{in}}, \nonumber
\end{align}
\vspace{-0.1in}
and
\begin{align}
    \vX^{(k+1)} = (1-\alpha) \tA \vX^{(k)} + \alpha \vX_{\text{in}}. \nonumber
\end{align}
They are shown to be the exact solution and one gradient descent step with stepsize $\alpha/2$ for the following problem
\begin{align}
\label{equ: denoise_problem}
\min_{\vF}\|\vF-\vX_{\text{in}}\|^2_F + ({1/\alpha}-1)~ \tr (\vF^{\top}  (\vI-\tA) \vF). 
\end{align}
For more comprehensive illustration, please refer to~\citep{ma2020unified}. We point out that all these message passing schemes adopt $\ell_2$-based graph smoothing as the signal differences between neighboring nodes are penalized by the square of $\ell_2$ norm, e.g., 
$\sum_{(v_i, v_j)\in \cE} \| \frac{\vF_i}{\sqrt{d_i+1}} - \frac{\vF_j}{\sqrt{d_j+1}} \|_2^2$ with $d_i$ 
being the node degree of node $v_i$. The resulted message passing schemes are usually linear smoothers which smooth the input signal by their linear transformation. 

\subsection{Graph Trend Filtering}
In the univariate case, the $k$-th order graph trend filtering (GTF) estimator~\citep{wang2016trend} is given by
\vspace{-0.1in}
\begin{align}
\label{eq:gtf}
    \argmin_{\vf \in \RR^n} = \frac{1}{2} \|\vf-\vx\|_2^2 + \lambda \|\Delta^{(k+1)} \vf \|_1
\end{align}
where $\vx\in\RR^{n}$ is the $1$-dimensional input signal of $n$ nodes and $\Delta^{(k+1)}$ is a $k$-th order graph difference operator. When $k=0$, it penalizes the absolute difference across neighboring nodes in graph $\cG$: 
$$
\|\Delta^{(1)} \vf \|_1 = \sum_{(v_i, v_j) \in \cE} |\vf_i-\vf_j |
\vspace{-0.1in}
$$
where $\Delta^{(1)}$ is equivalent to the incident matrix $\Delta$. Generally, $k$-th order graph difference operators can be defined recursively:
\vspace{-0.1in}
\begin{align}
    \Delta^{(k+1)} = \left \{ 
    \begin{array}{cc}
    \Delta^{\top} \Delta^{(k)} = \vL^{\frac{k+1}{2}} \in  \mathbb{R}^{n\times n} & \mbox{for odd k} \\
    \Delta \Delta^{(k)} = \Delta \vL^{\frac{k}{2}} \in  \mathbb{R}^{m\times n} & \mbox{for even k}. 
    \end{array} \right. 
    \nonumber
\end{align}
It is demonstrated that GTF can adapt to inhomogeneity in the level of smoothness of signal and tends to provide piecewise polynomials over graphs~\citep{wang2016trend}. For instance, when $k=0$, the sparsity induced by the $\ell_1$-based penalty $\|\Delta^{(1)} \vf \|_1$ implies that many of the differences $\vf_i-\vf_j$ are zeros across edges $(v_i, v_j) \in \cE$ in $\cG$. The piecewise property originates from the discontinuity of signal allowed by less aggressive $\ell_1$ penalty, with adaptively chosen knot nodes or knot edges. Note that the smoothers induced by GTF are not linear smoothers and cannot be simply represented by linear transformation of the input signal.

\section{Elastic Graph Neural Networks}

In this section, we first propose a new graph signal denoising estimator.
Then we develop an efficient optimization algorithm for solving the denoising problem and introduce a novel, general and efficient message passing scheme, i.e., Elastic Message Passing (EMP), for graph signal smoothing. Finally, the integration of the proposed message passing scheme and deep neural networks leads to Elastic GNNs.

\subsection{Elastic Graph Signal Estimator}

To combine the advantages of  $\ell_1$ and $\ell_2$-based graph smoothing, we propose the following elastic graph signal estimator:
\begin{align}
\label{eq:problem_ori}
    \argmin_{\vF \in\RR^{n \times d}} 
    \underbrace{\lambda_1 \| \Delta \vF\|_1}_{g_1(\Delta \vF)} + \underbrace{\frac{\lambda_2}{2} \tr (\vF^{\top}\vL \vF) + \frac{1}{2}\|\vF-\vX_{\text{in}} \|_F^2 }_{f(\vF)}
\end{align}
where $\vX_{\text{in}} \in \RR^{n \times d}$ is the $d$-dimensional input signal of $n$ nodes.
The first term can be written in an edge-centric way: 
$\|\Delta^{(1)} \vF \|_1 = \sum_{(v_i, v_j) \in \cE} \|\vF_i-\vF_j \|_1$, which
penalizes the absolute difference across connected nodes in graph $\cG$. Similarly, the second term penalizes the difference quadratically via $\text{tr}(\vF^{\top}\vL \vF) = \sum_{(v_i, v_j) \in \cE} \|\vF_i-\vF_j \|_2^2$.
The last term is the fidelity term which preserves the similarity with the input signal. 
The regularization coefficients $\lambda_1$ and $\lambda_2$ control the balance between  $\ell_1$ and $\ell_2$-based graph smoothing.

\begin{remark}
It is potential to consider higher-order graph differences in both the $\ell_1$-based  and $\ell_2$-based smoothers. But, in this work, we focus on the $0$-th order graph difference operator $\Delta$,
since we assume the piecewise constant prior for graph representation learning. 
\end{remark}

\textbf{Normalization.}
In existing GNNs, it is beneficial to normalize the Laplacian matrix for better numerical stability, and the normalization trick is also crucial for achieving superior performance. Therefore, for the $\ell_2$-based graph smoothing, we follow the common normalization trick in GNNs: $\tL=\vI-\tA$, where $\tA = \hD^{-\frac{1}{2}}\hA\hD^{-\frac{1}{2}}$, $\hA=\vA+\vI$ and $\hD_{ii}=d_i=\sum_j \hA_{ij}$. It leads to a degree normalized penalty 
$$\text{tr}(\vF^{\top}\tL \vF) = \sum_{(v_i, v_j) \in \cE} \left\|\frac{\vF_i}{\sqrt{d_i+1}}-\frac{\vF_j}{\sqrt{d_j+1}} \right\|_2^2.$$
In the literature of graph total variation and graph trend filtering, the normalization step is often overlooked and the graph difference operator is directly used as in GTF~\citep{wang2016trend, varma2019vector}. To achieve better numerical stability and handle diverse node degrees in real-world graphs, we propose to normalize each column of the incident matrix by the square root of node degrees for the $\ell_1$-based graph smoothing as follows\footnote{It naturally supports read-value edge weights if the edge weights are set in the incident matrix $\Delta$.}:
$$\tDelta = \Delta \hD^{-\frac{1}{2}}.$$
It leads to a degree normalized total variation penalty
\footnote{With the normalization, the piecewise constant prior is up to the degree scaling, i.e., sparsity in $\tDelta \vF$.}
$$\|\tDelta \vF\|_1 = \sum_{(v_i, v_j) \in \cE} \left\|\frac{\vF_i}{\sqrt{d_i+1}}-\frac{\vF_j}{\sqrt{d_j+1}} \right\|_1.$$
Note that this normalized incident matrix maintains the relation with the normalized Laplacian matrix as in the unnormalized case
\begin{align}
\label{eq:normalized_inc}
    \tL = {\tDelta}^{\top} \tDelta
\end{align}
given that
\begin{align*}
\tL 
= \hD^{-\frac{1}{2}} (\hat \vD-\hat \vA)  \hat \vD^{-\frac{1}{2}}   
=  \hD^{-\frac{1}{2}} \vL \hat \vD^{-\frac{1}{2}} 
= \hat \vD^{-\frac{1}{2}} \Delta^{\top} \Delta \hat \vD^{-\frac{1}{2}}.
\end{align*}

With the normalization, the estimator defined in~\eqref{eq:problem_ori} becomes:
\begin{align}
\label{eq:problem_normalized_l1}
    \argmin_{\vF \in\RR^{n \times d}} 
    \underbrace{\lambda_1 \| \tDelta \vF\|_1}_{g_1(\tDelta \vF)} + \underbrace{\frac{\lambda_2}{2} \tr (\vF^{\top}\tL \vF) + \frac{1}{2}\|\vF-\vX_{\text{in}} \|_F^2 }_{f(\vF)}.
\end{align}
\textbf{Capture correlation among dimensions.} 
The node features in real-world graphs are usually multi-dimensional. Although the estimator defined in~\eqref{eq:problem_normalized_l1} is able to handle multi-dimensional data since the signal from different dimensions are separable under $\ell_1$ and $\ell_2$ norm, such estimator treats each feature dimension independently and does not exploit the potential relation between feature dimensions. However, the sparsity patterns of node difference across edges could be shared among feature dimensions. To better exploit this potential correlation, we propose to couple the multi-dimensional features by $\ell_{21}$ norm, which penalizes the summation of $\ell_2$ norm of the node difference 
$$\|\tDelta \vF\|_{21} = \sum_{(v_i, v_j) \in \cE} \left\|\frac{\vF_i}{\sqrt{d_i+1}}-\frac{\vF_j}{\sqrt{d_j+1}} \right\|_2.$$
This penalty promotes the row sparsity of $\tDelta \vF$ and enforces similar sparsity patterns among feature dimensions. In other words, if two nodes are similar, all their feature dimensions should be similar. Therefore, we define the $\ell_{21}$-based estimator as
\begin{align}
\label{eq:problem_normalized_l21}
    \argmin_{\vF \in\RR^{n \times d}} 
    \underbrace{\lambda_1 \| \tDelta \vF\|_{21}}_{g_{21}(\tDelta \vF)} + \underbrace{\frac{\lambda_2}{2} \tr (\vF^{\top}\tL \vF) + \frac{1}{2}\|\vF-\vX_{\text{in}} \|_F^2 }_{f(\vF)}
\end{align}
where $g_{21}(\cdot)=\lambda_1 \|\cdot\|_{21}$.
In the following subsections, we will use $g(\cdot)$ to represent both $g_1(\cdot)$ and $g_{21}(\cdot)$. We use $\ell_1$ to represent both $\ell_1$ and $\ell_{21}$ if not specified.

\subsection{Elastic Message Passing}

For the $\ell_2$-based graph smoother, message passing schemes can be derived from the gradient descent iterations of the graph signal denoising problem, as in the case of GCN and APPNP~\citep{ma2020unified}. However, computing the estimators defined by~\eqref{eq:problem_normalized_l1} and~\eqref{eq:problem_normalized_l21} is much more challenging because of the nonsmoothness, and the two components, i.e., $f(\vF)$ and $g(\tDelta\vF)$, are non-separable as they are coupled by the graph difference operator $\tDelta$. 
In the literature, researchers have developed optimization algorithms for the graph trend filtering problem~\eqref{eq:gtf} such as Alternating Direction Method of Multipliers (ADMM) and Newton type algorithms~\citep{wang2016trend, varma2019vector}. However, these algorithms require to solve the minimization of a non-trivial sub-problem in each single iteration, which incurs high computation complexity. Moreover, it is unclear how to make these iterations compatible with the back-propagation training of deep learning models. This motivates us to design an algorithm which is not only efficient but also friendly to back-propagation training.  To this end, we propose to solve an equivalent saddle point problem using a primal-dual algorithm with efficient computations. 

\textbf{Saddle point reformulation.} 
For a general convex function $g(\cdot)$, its conjugate function is defined as
$$g^*(\vZ) :=\sup_{\vX} \langle \vZ, \vX \rangle - g(\vX).$$ 
By using $g(\tDelta \vF) = \sup\limits_{\vZ} \langle \tDelta \vF, \vZ \rangle - g^*(\vZ)$, the problem~\eqref{eq:problem_normalized_l1} and~\eqref{eq:problem_normalized_l21} can be equivalently written as the following saddle point problem:
\begin{align}
\label{eq:saddle_point}
    \min_{\vF} \max_{\vZ} f(\vF) + \langle \tDelta \vF, \vZ \rangle - g^*(\vZ).
\end{align}
where $\vZ \in \RR^{m\times d}$.
Motivated by Proximal Alternating Predictor-Corrector (PAPC)~\citep{loris2011generalization,chen2013primal}, we propose an efficient algorithm with per iteration low computation complexity and convergence guarantee:
\begin{numcases}{}
\bar \vF^{k+1} &$= \vF^{k} - \gamma \nabla f(\vF^k) - \gamma \tDelta^{\top} \vZ^k$, \label{eq:opt_step1} \\
    \vZ^{k+1} &$= \prox_{\beta g^*} (\vZ^k+ \beta \tDelta \bar \vF^{k+1})$, \label{eq:opt_step2} \\
    \vF^{k+1} &$= \vF^{k} - \gamma \nabla f(\vF^k) - \gamma \tDelta^{\top} \vZ^{k+1}$, \label{eq:opt_step3}
\end{numcases}
where $\prox_{\beta g^*}(\vX) = \argmin\limits_{\vY} \frac{1}{2}\|\vY-\vX\|_F^2 + \beta g^*(\vY)$. The stepsizes, $\gamma$ and $\beta$, will be specified later.
The first step~\eqref{eq:opt_step1} obtains a prediction of $\vF^{k+1}$, i.e., $\bar \vF^{k+1}$, by a gradient descent step on primal variable $\vF^k$. The second step~\eqref{eq:opt_step2} is a proximal dual ascent step on the dual variable $\vZ^k$ based on the predicted $\bar \vF^{k+1}$. Finally, another gradient descent step on the primal variable based on $(\vF^{k}, \vZ^{k+1})$ gives next iteration $\vF^{k+1}$~\eqref{eq:opt_step3}. 
Algorithm~\eqref{eq:opt_step1}--\eqref{eq:opt_step3} can be interpreted as a ``predict-correct'' algorithm for the saddle point problem~\eqref{eq:saddle_point}. 
Next we demonstrate how to compute the proximal operator in Eq.~\eqref{eq:opt_step2}.

\textbf{Proximal operators.}
Using the Moreau's decomposition principle~\citep{10.5555/2028633} 
$$\vX=\prox_{\beta g^*}(\vX) + \beta \prox_{\beta^{-1} g} (\vX/\beta),$$ 
we can rewrite the step~\eqref{eq:opt_step2} using the proximal operator of $g(\cdot)$, that is, 
\vspace{-0.1in}
\begin{align}
\label{eq:moreau}
    \prox_{\beta g^*}(\vX) =  
    \vX - \beta \prox_{\frac{1}{\beta} g}(\frac{1}{\beta} \vX). 
\end{align}

\begin{figure*}[!ht]
\centering
\colorbox{gray!10}{
\centering 
\begin{minipage}{1.7\columnwidth}
\begin{numcases}{}
\vY^{k+1}= \gamma \vX_{\text{in}} + (1-\gamma) \tA \vF^k \label{eq:mp_step0} \nonumber \\
\bar \vF^{k+1} = \vY^k -\gamma \tDelta^{\top} \vZ^k \nonumber \\
\bar \vZ^{k+1} = \vZ^k + \beta \tDelta \bar \vF^{k+1} \nonumber \\
\begin{cases}
\vZ^{k+1} = \min( |\bar \vZ^{k+1} |, \lambda_1) \cdot \sign (\bar \vZ^{k+1}) ~~~~~~~~~~~~~~~~~~\mbox{({\bf Option I:} ~}\ell_1\mbox{ norm)}  \\
\vZ^{k+1}_i =  \min(\|\bar \vZ^{k+1}_i \|_2, \lambda_1) \cdot \frac{\bar \vZ^{k+1}_i}{ \| \bar \vZ^{k+1}_i\|_2}, \forall i\in [m] \hfill ~~~~\mbox{({\bf Option II:} ~}\ell_{21} \mbox{ norm)}
\label{eq:mp_step2}
\end{cases} \nonumber \\
\vF^{k+1} = \vY^k - \gamma \tDelta^{\top} \vZ^{k+1} \nonumber \label{eq:mp_step3}
\end{numcases}
\end{minipage}
}

\caption{Elastic Message Passing (EMP). 
$\vF^0=\vX_{\text{in}}$ and $\vZ^0=\vzero^{m\times d}$.
}
\label{fig:mp}
\end{figure*}

%
We discuss the two options for the function $g(\cdot)$ corresponding to the objectives~\eqref{eq:problem_normalized_l1} and~\eqref{eq:problem_normalized_l21}.
\begin{itemize}[leftmargin=0.15in]
    \item
    \textbf{Option I} ($\ell_{1}$ norm): $g_1(\vX) = \lambda_1 \|\vX\|_1$
    
By definition, the proximal operator of $\frac{1}{\beta} g_1(\vX)$ is
$$\prox_{\frac{1}{\beta} g_1}(\vX) = \argmin_{\vY} \frac{1}{2}\|\vY-\vX\|_F^2 + \frac{1}{\beta} \lambda_1 \|\vY\|_1,$$
which is equivalent to the soft-thresholding operator (component-wise): 
\begin{align*}
    (S_{\frac{1}{\beta}  \lambda_1}(\vX))_{ij}=&\sign(\vX_{ij})\max(|\vX_{ij}|-\frac{1}{\beta} \lambda_1,0)\\
    =& \vX_{ij}-\sign(\vX_{ij})\min(|\vX_{ij}|, \frac{1}{\beta} \lambda_1).
\end{align*}
Therefore, using~\eqref{eq:moreau}, we have 
\begin{align}
\label{eq:proximal_l1}
    (\prox_{\beta g_1^*}(\vX))_{ij} =
     \sign(\vX_{ij})\min(|\vX_{ij}|, \lambda_1).
\end{align}
which is a \textit{component-wise projection} onto the $\ell_{\infty}$ ball of radius $\lambda_1$.

\item \textbf{Option II} ($\ell_{21}$ norm): $g_{21}(\vX)= \lambda_1 \|\vX\|_{21}$

By definition, the proximal operator of $\frac{1}{\beta} g_{21}(\vX)$ is
\begin{align*}
    \prox_{\frac{1}{\beta}  g_{21}} (\vX) = \argmin_{\vY} \frac{1}{2}\|\vY-\vX\|^2_F + \frac{1}{\beta} \lambda_1 \|\vY\|_{21} 
\end{align*}
with the $i$-th row being
$$\big (\prox_{ \frac{1}{\beta}  g_{21}} (\vX) \big )_i = 
\frac{\vX_{i}}{\|\vX_i\|_2}  \max(\|\vX_i\|_2 - \frac{1}{\beta} \lambda_1,0).$$
Similarly, using~\eqref{eq:moreau}, we have the $i$-th row of $\prox_{\beta g_{21}^*}(\vX)$ being
\begin{align}
\label{eq:proximal_l21}
& (\prox_{\beta g_{21}^*} (\vX))_i \nonumber \\ 
&= \vX_i - \beta \prox_{\frac{1}{\beta}g_{21}}(\vX_i/\beta) \nonumber  \\
&= \vX_i - \beta \frac{\vX_i/\beta}{\|\vX_i/\beta\|_2 } \max(\|\vX_i/\beta\|_2- \lambda_1/\beta, 0) \nonumber  \\
&= \vX_i - \frac{\vX_i}{\|\vX_i\|_2 } \max(\|\vX_i\|_2-\lambda_1, 0) \nonumber  \\
&= \frac{\vX_i}{\|\vX_i\|_2 } (\|\vX_i\|_2 - \max(\|\vX_i\|_2-\lambda_1, 0) ) \nonumber  \\
&= \frac{\vX_i}{\|\vX_i\|_2 }\min(\|\vX_i\|_2, \lambda_1),
\end{align}
which is a \textit{row-wise projection} on the $\ell_{2}$ ball of radius $\lambda_1$. Note that the proximal operator in the $\ell_1$ norm case treats each feature dimension independently, while in the $\ell_{21}$ norm case, it couples the multi-dimensional features, which is consistent with the motivation to exploit the correlation among feature dimensions.
\end{itemize}

The Algorithm~\eqref{eq:opt_step1}--\eqref{eq:opt_step3} and the proximal operators~\eqref{eq:proximal_l1} and~\eqref{eq:proximal_l21} enable us to derive the final message passing scheme.
Note that the computation $\vF^{k} - \gamma \nabla f(\vF^k)$ in steps~\eqref{eq:opt_step1} and~\eqref{eq:opt_step3} can be shared to save computation. Therefore, we decompose the step~\eqref{eq:opt_step1} into two steps:
\begin{align}
\vY^k &=\vF^k-\gamma \nabla f(\vF^k) \nonumber \\
&=\big ( (1-\gamma)\vI - \gamma\lambda_2 \tL \big ) \vF^k + \gamma \vX_{\text{in}}, \label{eq:gd_step} \\
\bar \vF^{k+1} &= \vY^k -\gamma \tDelta^{\top} \vZ^k.
\end{align}
In this work, we choose $\gamma = \frac{1}{1+\lambda_2}$ and $\beta=\frac{1}{2\gamma}$. Therefore, with $\tL=\vI-\tA$, Eq.~\eqref{eq:gd_step} can be simplified as 
\begin{align}
\vY^{k+1}= \gamma \vX_{\text{in}} + (1-\gamma) \tA \vF^k.
\end{align}
Let $\bar \vZ^{k+1} := \vZ^k + \beta \tDelta \bar \vF^{k+1}$, then steps~\eqref{eq:opt_step2} and~\eqref{eq:opt_step3} become 
\begin{align}
\vZ^{k+1} &= \prox_{\beta g^*}(\bar \vZ^{k+1}), \label{eq:proximal_derive}\\
\vF^{k+1} &= \vF^k-\gamma \nabla f(\vF^k)-\gamma\tDelta\vZ^{k+1} \nonumber \\
          &= \vY^k - \gamma \tDelta^{\top} \vZ^{k+1}.
\end{align}
Substituting the proximal operators in~\eqref{eq:proximal_derive} with ~\eqref{eq:proximal_l1} and~\eqref{eq:proximal_l21}, we obtain the complete elastic message passing scheme (EMP) as summarized in Figure~\ref{fig:mp}. 

\textbf{Interpretation of EMP.} 
EMP can be interpreted as the standard message passing (MP) ($\vY$ in Fig. 1) with extra operations (the following steps). The extra operations compute $\tDelta^\top\vZ$ to adjust the standard MP such that sparsity in $\tDelta \vF$ is promoted and some large node differences can be preserved.
EMP is general and covers some existing propagation rules as special cases as demonstrated in Remark~\ref{thm:special}.
\begin{remark}[Special cases]
\label{thm:special}
If there is only $\ell_2$-based regularization, i.e., $\lambda_1=0$, then according to the projection operator, we have $\vZ^k=\vzero^{m\times n}$. Therefore, with $\gamma=\frac{1}{1+\lambda_2}$, the proposed message passing scheme reduces to
$$\vF^{k+1}= \frac{1}{1+\lambda_2} \vX_{\text{in}} + \frac{\lambda_2}{1+\lambda_2} \tA \vF^k.$$
If $\lambda_2=\frac{1}{\alpha}-1$, it recovers the message passing in APPNP:
$$\vF^{k+1}= \alpha \vX_{\text{in}} + (1-\alpha) \tA \vF^k.$$
If $\lambda_2=\infty$, it recovers the simple aggregation operation in many GNNs:
$$\vF^{k+1}= \tA \vF^k.$$
\end{remark}

\textbf{Computation Complexity.} 
EMP is efficient and composed by simple operations. The major computation cost comes from four sparse matrix multiplications, include $\tA \vF^k, \tDelta^{\top} \vZ^k, \tDelta \bar\vF^{k+1}$ and $\tDelta^{\top} \vZ^{k+1}$. The computation complexity is in the order $O (md)$ where $m$ is the number of edges in graph $\cG$ and $d$ is the feature dimension of input signal $\vX_{\text{in}}$. 
Other operations are simple matrix additions and projection.

The convergence of EMP and the parameter settings are justified by Theorem~\ref{thm:converge}, with a proof deferred to Appendix~\ref{supp:convergence}.
\begin{theorem}[Convergence]
\label{thm:converge}
Under the stepsize setting $\gamma < \frac{2}{1+\lambda_2\|\tL\|_2}$ and $\beta \leq \frac{4}{3\gamma \|\tDelta \tDelta^{\top}\|_2}$, the elastic message passing scheme (EMP) in Figure~\ref{fig:mp} converges to the optimal solution of the elastic graph signal estimator defined in~\eqref{eq:problem_normalized_l1} (Option I) or ~\eqref{eq:problem_normalized_l21} (Option II). It is sufficient to choose any $\gamma < \frac{2}{1+2\lambda_2}$ and $\beta \leq \frac{2}{3\gamma}$ since $\|\tL\|_2 = \|\tDelta^{\top} \tDelta\|_2  = \|\tDelta \tDelta^{\top}\|_2  \leq 2$.
\end{theorem}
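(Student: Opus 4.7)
The plan is to recognize the elastic message passing (EMP) iteration as a direct instantiation of the Proximal Alternating Predictor--Corrector (PAPC) scheme of \citet{loris2011generalization,chen2013primal} applied to the saddle point reformulation~\eqref{eq:saddle_point}, and then invoke its existing convergence theorem after checking the hypotheses and translating the stepsize bounds into the notation used here. Concretely, I would first undo the bookkeeping simplification in the paper: absorb $\vY^k$ back into $\vF^k-\gamma\nabla f(\vF^k)$ and identify the three displayed EMP lines with the PAPC updates~\eqref{eq:opt_step1}--\eqref{eq:opt_step3}, with $K:=\tDelta$ playing the role of the linear coupling operator, $f$ the smooth term, and $g\in\{g_1,g_{21}\}$ the nonsmooth term.

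Next I would verify the structural hypotheses of PAPC: $g$ is proper, closed, and convex in both options ($\ell_1$ and $\ell_{21}$ are norms), and $f(\vF)=\tfrac12\|\vF-\vX_{\text{in}}\|_F^2+\tfrac{\lambda_2}{2}\tr(\vF^\top\tL\vF)$ is convex with gradient $\nabla f(\vF)=(\vI+\lambda_2\tL)\vF-\vX_{\text{in}}$. Since $\tL\succeq 0$, $\nabla f$ is Lipschitz with constant
\[
L_f \;=\; \|\vI+\lambda_2\tL\|_2 \;=\; 1+\lambda_2\|\tL\|_2.
\]
This immediately gives the primal stepsize bound $\gamma<2/L_f=2/(1+\lambda_2\|\tL\|_2)$ required by PAPC.

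I would then apply the PAPC convergence theorem, which (for convex $f$ with Lipschitz gradient and proper closed convex $g$) guarantees convergence of $\vF^k$ to a primal minimizer and $\vZ^k$ to a dual maximizer of~\eqref{eq:saddle_point} whenever the stepsizes satisfy $\gamma<2/L_f$ together with the coupling condition $\beta\gamma\|K K^\top\|_2 \le 4/3$; with $K=\tDelta$ this is exactly $\beta\le 4/(3\gamma\|\tDelta\tDelta^\top\|_2)$, matching the statement. Since strong duality holds for this problem (qualification is trivial because $f$ is everywhere finite and $g\circ\tDelta$ is real-valued), any saddle point yields a minimizer of~\eqref{eq:problem_normalized_l1} or~\eqref{eq:problem_normalized_l21}, so $\vF^k$ converges to the estimator in question.

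Finally, to justify the simplified conditions, I would use the standard spectral fact that the symmetrically normalized Laplacian satisfies $0\preceq\tL\preceq 2\vI$, hence $\|\tL\|_2\le 2$; combined with the identity $\tL=\tDelta^\top\tDelta$ from~\eqref{eq:normalized_inc} and the fact that $\tDelta^\top\tDelta$ and $\tDelta\tDelta^\top$ share the same nonzero spectrum, this gives $\|\tDelta\tDelta^\top\|_2=\|\tL\|_2\le 2$, so it suffices to take $\gamma<2/(1+2\lambda_2)$ and $\beta\le 2/(3\gamma)$. The main obstacle I anticipate is not the calculus but keeping the correspondence to PAPC bookkeeping-clean: in particular, the theorem's constants depend on whether one uses the Loris--Verhoeven formulation or the slightly rescaled Chen--Pock variant, and I would need to match the $4/3$ factor carefully (rather than the looser $1$ one sees in ordinary primal--dual splitting) by appealing to the PAPC-specific analysis that exploits the explicit gradient step on $f$.
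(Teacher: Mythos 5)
Your proposal is correct and follows essentially the same route as the paper's own proof: both identify the EMP iterations with the PAPC scheme of \citet{loris2011generalization,chen2013primal} (with the relaxed factor $\tfrac{4}{3}$ stepsize condition, which the paper obtains by citing \citet{li2017primal}), compute the Lipschitz constant $L=1+\lambda_2\|\tL\|_2$ from the Hessian $\vI+\lambda_2\tL$, and use $\tL=\tDelta^{\top}\tDelta$ together with the equality of the nonzero spectra of $\tDelta^{\top}\tDelta$ and $\tDelta\tDelta^{\top}$ and the bound $\|\tL\|_2\le 2$ to get the simplified conditions. The only cosmetic difference is that you phrase convergence through the saddle-point formulation plus strong duality, whereas the paper applies the cited result directly to the composite problem $\min_{\vF} f(\vF)+g(\tDelta\vF)$; the substance is the same.
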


\subsection{Elastic GNNs}

Incorporating the elastic message passing scheme from the elastic graph signal estimator~\eqref{eq:problem_normalized_l1} and~\eqref{eq:problem_normalized_l21} into deep neural networks, we introduce a family of GNNs, namely Elastic GNNs. 
In this work, we follow the decoupled way as proposed in APPNP~\citep{klicpera2018predict}, where we first make predictions from node features and aggregate the prediction through the proposed EMP:
\begin{align}
\vY_{\text{pre}} = \textbf{EMP}~\big (h_{\theta}(\vX_{\text{fea}}), K, \lambda_1, \lambda_2\big).
\end{align}
$\vX_{\text{fea}} \in \RR^{n\times d}$ denotes the node features, $h_{\theta}(\cdot)$ is any machine learning model, such as multilayer perceptrons (MLPs), $\theta$ is the learnable parameters in the model, and $K$ is the number of message passing steps. The training objective is the cross entropy loss defined by the final prediction $\vY_{\text{pre}}$ and labels for training data. Elastic GNNs also have the following nice properties:
\begin{itemize}
\setlength\itemsep{0em}
    \item In addition to the backbone neural network model, Elastic GNNs only require to set up three hyperparameters including two coefficients $\lambda_1, \lambda_2$ and the propagation step $K$, but they do not introduce any learnable parameters. Therefore, it reduces the risk of overfitting. 
    
    \item The hyperparameters $\lambda_1$ and $\lambda_2$ provide better smoothness adaptivity to Elastic GNNs depending on the smoothness properties of the graph data.
    
    \item The message passing scheme only entails simple and efficient operations, which makes it friendly to the efficient and end-to-end back-propagation training of the whole GNN model.
\end{itemize}

\section{Experiment}

In this section, we conduct experiments to validate the effectiveness of the proposed Elastic GNNs. We first introduce the experimental settings. Then we assess the performance of Elastic GNNs and investigate the benefits of introducing $\ell_1$-based graph smoothing into GNNs with semi-supervised learning tasks under normal and adversarial settings. In the ablation study, we validate the local adaptive smoothness, sparsity pattern, and convergence of EMP.

\subsection{Experimental Settings}

\textbf{Datasets.}  We conduct experiments on 8 real-world datasets including three citation graphs, i.e., Cora, Citeseer, Pubmed~\citep{sen2008collective}, two co-authorship graphs, i.e., Coauthor CS and Coauthor Physics~\citep{shchur2018pitfalls}, two co-purchase graphs, i.e., Amazon Computers and Amazon Photo~\citep{shchur2018pitfalls}, and one blog graph, i.e., Polblogs~\citep{adamic2005political}. In Polblogs graph, node features are not available so we set the feature matrix to be a $n\times n$ identity matrix. 

\textbf{Baselines.} We compare the proposed Elastic GNNs with representative GNNs including GCN~\citep{kipf2016semi}, GAT~\citep{velivckovic2017graph}, ChebNet~\citep{defferrard2016convolutional}, GraphSAGE~\citep{hamilton2017inductive}, APPNP~\citep{klicpera2018predict} and SGC~\citep{wu2019simplifying}. 
For all models, we use $2$ layer neural networks with $64$ hidden units. 

\textbf{Parameter settings.} For each experiment, we report the average performance and the standard variance of 10 runs. 
For all methods, hyperparameters are tuned from the following search space: 
1) learning rate: $\{0.05, 0.01, 0.005\}$; 
2) weight decay: \{5e-4, 5e-5, 5e-6\}; 3) dropout rate: \{0.5, 0.8\}. 
For APPNP, the propagation step $K$ is tuned from $\{5, 10\}$ and the parameter $\alpha$ is tuned from $\{0, 0.1, 0.2, 0.3, 0.5, 0.8, 1.0\}$. For Elastic GNNs, the propagation step $K$ is tuned from $\{5, 10\}$ and parameters $\lambda_1$ and  $\lambda_2$ are tuned from $\{0, 3, 6, 9\}$. As suggested by Theorem 1, we set $\gamma=\frac{1}{1+\lambda_2}$ and $\beta=\frac{1}{2\gamma}$ in the proposed elastic message passing scheme.
Adam optimizer~\citep{kingma2014adam} is used in all experiments. 

\subsection{Performance on Benchmark Datasets}
\label{sec:benchmark}

On commonly used datasets including Cora, CiteSeer, PubMed, Coauthor CS, Coauthor Physics, Amazon Computers and Amazon Photo, we compare the performance of the proposed Elastic GNN ($\ell_{21}+\ell_2$) with representative GNN baselines on the semi-supervised learning task. The detail statistics of these datasets and data splits are summarized in Table~\ref{tab:dataset} in Appendix~\ref{sec:data}. The classification accuracy are showed in Table~\ref{tab:result_normal}. From these results, we can make the following observations:
\begin{itemize}
\setlength\itemsep{0em}
    \item Elastic GNN outperforms GCN, GAT, ChebNet, GraphSAGE and SGC by significant margins on all datasets. For instance, Elastic GNN improves over GCN by $3.1\%$, $2.0\%$ and $1.8\%$ on Cora, CiteSeer and PubMed datasets.
    The improvement comes from the global and local smoothness adaptivity of Elastic GNN.
    
    \item Elastic GNN ($\ell_{21}+\ell_2$) consistently achieves higher performance than APPNP on all datasets. Essentially, Elastic GNN covers APPNP as a special case when there is only $
    \ell_2$ regularization, i.e., $\lambda_1=0$. Beyond the $\ell_2$-based graph smoothing, the $\ell_{21}$-based graph smoothing further enhances the local smoothness adaptivity. This comparison verifies the benefits of introducing $\ell_{21}$-based graph smoothing in GNNs.
\end{itemize}


\begin{table*}[!ht]
\caption{Classification accuracy ($\%$) on benchmark datasets with $10$ times random data splits.}
\vskip 1em
\label{tab:result_normal}
\centering
\resizebox{1.5\columnwidth}{!}{
\begin{tabular}{lcccccccc}
\toprule
\textbf{Model} & \textbf{Cora} & \textbf{CiteSeer} & \textbf{PubMed} & \textbf{CS} & \textbf{Physics} & \textbf{Computers} & \textbf{Photo}   \\
\midrule
ChebNet & $76.3\pm1.5$ & $67.4\pm1.5$ & $75.0\pm2.0$ &$91.8\pm0.4$ & OOM & $\textbf{81.0}\pm\textbf{2.0}$ &$90.4\pm1.0$ \\
GCN     & $79.6\pm1.1$ &$68.9\pm1.2$  & $77.6\pm2.3$ &$91.6\pm0.6$ &$93.3\pm0.8$ & $79.8\pm1.6$ &$90.3\pm1.2$ \\
GAT     & $80.1\pm1.2$ &$68.9\pm1.8$  & $77.6\pm2.2$ &$91.1\pm0.5$ &$93.3\pm0.7$ & $79.3\pm2.4$ &$89.6\pm1.6$ \\
SGC     &$80.2\pm1.5$   &$68.9\pm1.3$ &$75.5\pm2.9$ &$90.1\pm1.3$ &$93.1\pm0.6$ & $73.0\pm2.0$ & $83.5\pm2.9$  \\

APPNP   &$82.2\pm1.3$   &$70.4\pm1.2$ & $78.9\pm2.2$ &$\textbf{92.5}\pm\textbf{0.3}$ & $93.7\pm0.7$ & $80.1\pm2.1$  & $90.8\pm1.3$  \\
GraphSAGE &$79.0\pm1.1$ &$67.5\pm2.0$ & $77.6\pm2.0$ &$91.7\pm0.5$ &$92.5\pm0.8$ & $80.7\pm1.7$ & $90.9\pm1.0$ \\
\textbf{ElasticGNN} & $\textbf{82.7}\pm\textbf{1.0}$ & $\textbf{70.9}\pm\textbf{1.4}$ & $\textbf{79.4}\pm\textbf{1.8}$ & $\textbf{92.5}\pm\textbf{0.3}$  & $\textbf{94.2}\pm\textbf{0.5}$  &  $80.7\pm1.8$  &$\textbf{91.3}\pm\textbf{1.3}$  \\
\bottomrule
\end{tabular}
}
\end{table*}

\vspace{0.1in}

\begin{table*}[!ht] 
\caption{Classification accuracy ($\%$) under different perturbation rates of adversarial graph attack. }
\vskip 1em
\label{tab:metattack}
\centering
\resizebox{1.6\columnwidth}{!}{

\begin{tabular}{c|c|cc|ccccccc}
\toprule
\multirow{2}{*}{Dataset} &  \multirow{2}{*}{Ptb Rate} & \multicolumn{2}{c|}{Basic GNN} & \multicolumn{5}{c}{Elastic GNN}    \\
&                     & GCN               & \multicolumn{1}{c|}{GAT}                   &  $\ell_2$  & $\ell_1$ & $\ell_{21}$   & $\ell_{1}+\ell_2$   & $\ell_{21}+\ell_2$  \\
\midrule
\multirow{6}{*}{Cora}     
& 0\%            & 83.5$\pm$0.4          & 84.0$\pm$0.7          & \textbf{85.8$\pm$0.4}  & 85.1$\pm$0.5          & 85.3$\pm$0.4        & \textbf{85.8}$\pm$\textbf{0.4}   & \textbf{85.8$\pm$0.4}   \\
& 5\%            & 76.6$\pm$0.8          & 80.4$\pm$0.7          & 81.0$\pm$1.0           & \textbf{82.3$\pm$1.1} & 81.6$\pm$1.1  & 81.9$\pm$1.4   & 82.2$\pm$0.9    \\
& 10\%           & 70.4$\pm$1.3          & 75.6$\pm$0.6          & 76.3$\pm$1.5           & 76.2$\pm$1.4 & 77.9$\pm$0.9   & 78.2$\pm$1.6  & \textbf{78.8$\pm$1.7}    \\
& 15\%           & 65.1$\pm$0.7          & 69.8$\pm$1.3          & 72.2$\pm$0.9           & 73.3$\pm$1.3 & 75.7$\pm$1.2  & 76.9$\pm$0.9   & \textbf{77.2$\pm$1.6}    \\
& 20\%           & 60.0$\pm$2.7          & 59.9$\pm$0.6          & 67.7$\pm$0.7           & 63.7$\pm$0.9 & 70.3$\pm$1.1   & 67.2$\pm$5.3  & \textbf{70.5$\pm$1.3}    \\
                  
\midrule 
\multirow{6}{*}{Citeseer} 
& 0\%            & 72.0$\pm$0.6          & 73.3$\pm$0.8          & 73.6$\pm$0.9            & 73.2$\pm$0.6 & 73.2$\pm$0.5   & 73.6$\pm$0.6 & \textbf{73.8$\pm$0.6}    \\
& 5\%            & 70.9$\pm$0.6          & 72.9$\pm$0.8          & 72.8$\pm$0.5            & 72.8$\pm$0.5 & 72.8$\pm$0.5  & \textbf{73.3$\pm$0.6}  & 72.9$\pm$0.5    \\
& 10\%           & 67.6$\pm$0.9          & 70.6$\pm$0.5          & 70.2$\pm$0.6            & 70.8$\pm$0.6 & 70.7$\pm$1.2  & 72.4$\pm$0.9  & \textbf{72.6$\pm$0.4}    \\
& 15\%           & 64.5$\pm$1.1          & 69.0$\pm$1.1          & 70.2$\pm$0.6            & 68.1$\pm$1.4 & 68.2$\pm$1.1  & 71.3$\pm$1.5  & \textbf{71.9$\pm$0.7}    \\
& 20\%           & 62.0$\pm$3.5          & 61.0$\pm$1.5          & \textbf{64.9$\pm$1.0}            & 64.7$\pm$0.8 & 64.7$\pm$0.8   & 64.7$\pm$0.8 & 64.7$\pm$0.8             \\
                       
\midrule 
\multirow{6}{*}{Polblogs} 
& 0\%            & 95.7$\pm$0.4          & 95.4$\pm$0.2     & 95.4$\pm$0.2       & \textbf{95.8$\pm$0.3} & \textbf{95.8$\pm$0.3}  & \textbf{95.8$\pm$0.3}   & \textbf{95.8$\pm$0.3}    \\
& 5\%            & 73.1$\pm$0.8          & 83.7$\pm$1.5          & 82.8$\pm$0.3            & 78.7$\pm$0.6 & 78.7$\pm$0.7   & 82.8$\pm$0.4 & \textbf{83.0$\pm$0.3}    \\
& 10\%           & 70.7$\pm$1.1          & 76.3$\pm$0.9          & 73.7$\pm$0.3           & 75.2$\pm$0.4 & 75.3$\pm$0.7    & 81.5$\pm$0.2 & \textbf{81.6$\pm$0.3}    \\
& 15\%           & 65.0$\pm$1.9          & 68.8$\pm$1.1          & 68.9$\pm$0.9           & 72.1$\pm$0.9 & 71.5$\pm$1.1   & 77.8$\pm$0.9  & \textbf{78.7$\pm$0.5}    \\
& 20\%           & 51.3$\pm$1.2          & 51.5$\pm$1.6          & 65.5$\pm$0.7           & 68.1$\pm$0.6 & 68.7$\pm$0.7   & 77.4$\pm$0.2  & \textbf{77.5$\pm$0.2}    \\
         
\midrule 
\multirow{6}{*}{Pubmed}   
& 0\%            & 87.2$\pm$0.1          & 83.7$\pm$0.4          & \textbf{88.1$\pm$0.1}           & 86.7$\pm$0.1 & 87.3$\pm$0.1  & \textbf{88.1$\pm$0.1}   & \textbf{88.1$\pm$0.1}           \\
& 5\%            & 83.1$\pm$0.1          & 78.0$\pm$0.4          & \textbf{87.1$\pm$0.2}           & 86.2$\pm$0.1 & 87.0$\pm$0.1   & \textbf{87.1$\pm$0.2}  & \textbf{87.1$\pm$0.2}           \\
& 10\%           & 81.2$\pm$0.1          & 74.9$\pm$0.4          & 86.6$\pm$0.1           & 86.0$\pm$0.2 & 86.9$\pm$0.2   & 86.3$\pm$0.1  & \textbf{87.0$\pm$0.1}    \\
& 15\%           & 78.7$\pm$0.1          & 71.1$\pm$0.5          & 85.7$\pm$0.2           & 85.4$\pm$0.2 & 86.4$\pm$0.2    & 85.5$\pm$0.1 & \textbf{86.4$\pm$0.2}     \\
& 20\%           & 77.4$\pm$0.2          & 68.2$\pm$1.0          & 85.8$\pm$0.1           & 85.4$\pm$0.1 & 86.4$\pm$0.1   & 85.4$\pm$0.1  & \textbf{86.4$\pm$0.1}     \\
\bottomrule
\end{tabular}
}
\end{table*}

\subsection{Robustness Under Adversarial Attack}
\label{sec:attack}
Locally adaptive smoothness makes Elastic GNNs more robust to adversarial attack on graph structure. This is because the attack tends to connect nodes with different labels, which fuzzes the cluster structure in the graph. But EMP can tolerate large node differences along these wrong edges, and maintain the smoothness along correct edges.

To validate this, we evaluate the performance of Elastic GNNs under untargeted adversarial graph attack, which tries to degrade GNN models' overall performance by deliberately modifying the graph structure. We use the MetaAttack~\citep{mettack} implemented in DeepRobust~\citep{li2020deeprobust}\footnote{\url{https://github.com/DSE-MSU/DeepRobust}}, a PyTorch library for adversarial attacks and defenses, to generate the adversarially attacked graphs based on four datasets including Cora, CiteSeer, Polblogs and PubMed. We randomly split $10\%/10\%/80\%$ of nodes for training, validation and test.
The detailed data statistics are summarized in Table~\ref{tab:attack_dataset} in Appendix~\ref{sec:data}. Note that following the works~\cite{nettack, mettack, all-you-need-is-low-rank, jin2020graph}, we only consider the largest connected component (LCC) in the adversarial graphs. 
Therefore, the results in Table~\ref{tab:metattack} are not directly comparable with the results in Table~\ref{tab:result_normal}. 
We focus on investigating the robustness introduced by $\ell_1$-based graph smoothing but not on adversarial defense so we don't compare with defense strategies. Existing defense strategies can be applied on Elastic GNNs to further improve the robustness against attacks. 

\textbf{Variants of Elastic GNNs.}
To make a deeper investigation of Elastic GNNs, we consider the following variants:
(1) $\ell_2 ~(\lambda_1=0)$; 
(2) $\ell_{1} ~(\lambda_2=0, \text{Option I})$; 
(3) $\ell_{21} ~(\lambda_2=0, \text{Option II})$; 
(4) $\ell_{1}+\ell_2 ~(\text{Option I})$; 
(5) $\ell_{21}+\ell_2 ~(\text{Option II})$. 
To save computation, we fix the learning rate as $0.01$, weight decay as $0.0005$, dropout rate as $0.5$ and $K=10$ since this setting works well for the chosen datasets and models. Only 
$\lambda_1$ and $\lambda_2$ are tuned.
The classification accuracy under different perturbation rates ranging from $0\%$ to $20\%$ is summarized in Table~\ref{tab:metattack}. From the results, we can make the following observations:
\begin{itemize}
\setlength\itemsep{0em}

    \item All variants of Elastic GNNs outperforms GCN and GAT by significant margins under all perturbation rates. For instance, when the pertubation rate is $15\%$, Elastic GNN ($\ell_{21}+\ell_2$) improves over GCN by $12.1\%$, $7.4\%$, $13.7\%$ and $7.7\%$ on the four datasets being considered. This is because Elastic GNN can adapt to the change of smoothness while GCN and GAT can not adapt well when the perturbation rate increases.
    
    \item $\ell_{21}$ outperforms $\ell_{1}$ in most cases, and $\ell_{21}+\ell_2$  outperforms $\ell_{1}+\ell_2$ in almost all cases. It demonstrates the benefits of exploiting the correlation between feature channels by coupling multi-dimensional features via $\ell_{21}$ norm.
    
    \item $\ell_{21}$ outperforms $\ell_{2}$ in most cases, which suggests the benefits of local smoothness adaptivity. When $\ell_{21}$ and $\ell_2$ is combined, the Elastic GNN ($\ell_{21}+\ell_2$) achieves significantly better performance than solely $\ell_{2}$, $\ell_{21}$ or $\ell_{1}$ variant in almost all cases. It suggests that $\ell_{1}$ and $\ell_{2}$-based graph smoothing are complementary to each other, and combining them provides significant better robustness against adversarial graph attacks. 
\end{itemize}

\subsection{Ablation Study}
We provide ablation study to further investigate the adaptive smoothness, sparsity pattern, and convergence of EMP in Elastic GNN, based on three datasets including Cora, CiteSeer and PubMed. In this section, we fix $\lambda_1=3, \lambda_2=3$ for Elastic GNN, and $\alpha=0.1$ for APPNP. We fix learning rate as $0.01$, weight decay as $0.0005$ and dropout rate as $0.5$ since this setting works well for both methods.

\textbf{Adaptive smoothness.}
It is expected that $\ell_1$-based smoothing enhances local smoothness adaptivity by increasing the smoothness along correct edges (connecting nodes with same labels) while lowering smoothness along wrong edges (connecting nodes with different labels). To validate this, we compute the average adjacent node differences (based on node features in the last layer) along wrong and correct edges separately, and use the ratio between these two averages to measure the smoothness adaptivity. The results are summarized in Table~\ref{tab:node_diff_ratio}. It is clearly observed that for all datasets, the ratio for ElasticGNN is significantly higher than $\ell_2$ based method such as APPNP, which validates its better local smoothness adaptivity.

\textbf{Sparsity pattern.} 
To validate the piecewise constant property enforced by EMP, we also investigate the sparsity pattern in the adjacent node differences, i.e., $\tDelta \vF$, based on node features in the last layer. Node difference along edge $e_i$ is defined as sparse if $\|(\tDelta \vF)_i \|_2 < 0.1$. The sparsity ratios for $\ell_2$-based method such as APPNP and $\ell_1$-based method such as Elastic GNN are summarized in Table~\ref{tab:sparsity_ratio}. It can be observed that in Elastic GNN, a significant portion of $\tDelta \vF$ are sparse for all datasets. While in APPNP, this portion is much smaller. This sparsity pattern validates the piecewise constant prior as designed.

\begin{table}[!ht]
\caption{Ratio between average node differences along wrong and correct edges.}
\vskip 0.3em
\label{tab:node_diff_ratio}
\centering
\resizebox{0.8\columnwidth}{!}{
\begin{tabular}{c|ccc}
\toprule
{\textbf{Model}} & {\textbf{Cora}} & {\textbf{CiteSeer}} & {\textbf{PubMed}} \\
\midrule
$\ell_2$ (APPNP)     & $1.57$ & $1.35$  & $1.43$ \\
$\ell_{21}$+$\ell_2$ (ElasticGNN)  & $2.03$ & $1.94$  & $1.79$ \\
\bottomrule
\end{tabular}
}
\end{table}

\vspace{-0.1in}
\begin{table}[!ht]
\caption{Sparsity ratio (i.e., $\|(\tDelta \vF)_i\|_2 < 0.1$) in node differences $\tDelta \vF$.}
\vskip 0.3em
\label{tab:sparsity_ratio}
\centering
\resizebox{0.8\columnwidth}{!}{
\begin{tabular}{c|ccc}
\toprule
{\textbf{Model}} & {\textbf{Cora}} & {\textbf{CiteSeer}} & {\textbf{PubMed}} \\
\midrule
$\ell_2$ (APPNP)     & $2\%$ & $16\%$  & $11\%$ \\
$\ell_{21}$+$\ell_2$ (ElasticGNN) & $37\%$ & $74\%$  & $42\%$ \\
\bottomrule
\end{tabular}
}
\end{table}

\textbf{Convergence of EMP.}
We provide two additional experiments to demonstrate the impact of propagation step $K$ on classification performance and the convergence of message passing scheme. Figure~\ref{fig:changeK_acc} shows that the increase of classification accuracy when the propagation step $K$ increases. It verifies the effectiveness of EMP in improving graph representation learning. It also shows that a small number of propagation step can achieve very good performance, and therefore the computation cost for EMP can be small. 
Figure~\ref{fig:changeK_converge} shows the decreasing of the objective value defined in Eq.~\eqref{eq:problem_normalized_l21} during the forward message passing process, and it verifies the convergence of the proposed EMP as suggested by Theorem~\ref{thm:converge}.

\begin{figure}[h]
\centering
\includegraphics[width = 0.9\columnwidth]{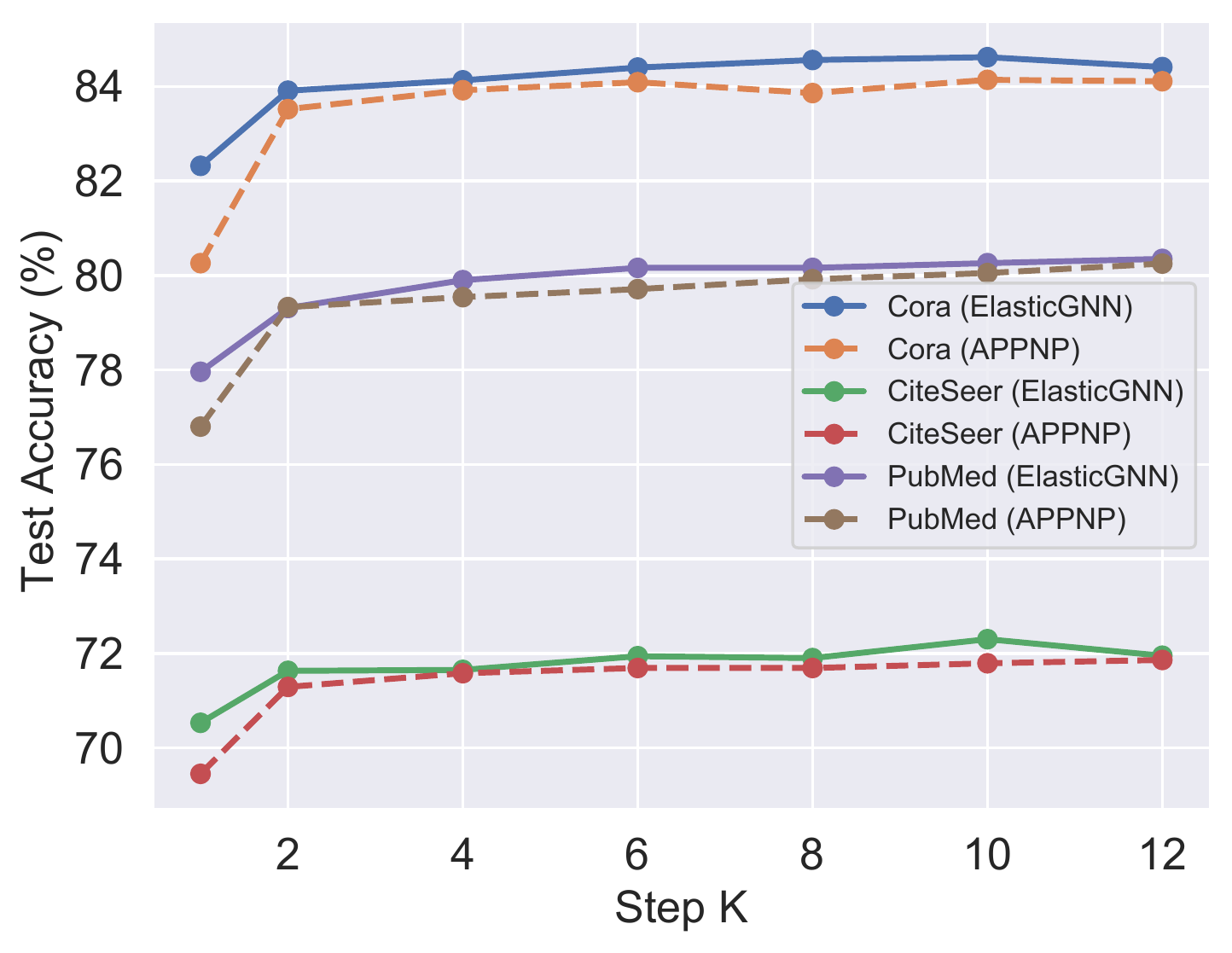}
\vspace{-0.15in}
\caption{Classification accuracy under different propagation steps.}
\label{fig:changeK_acc}
\end{figure}
\begin{figure}[h]
\centering
\includegraphics[width = 0.9\columnwidth]{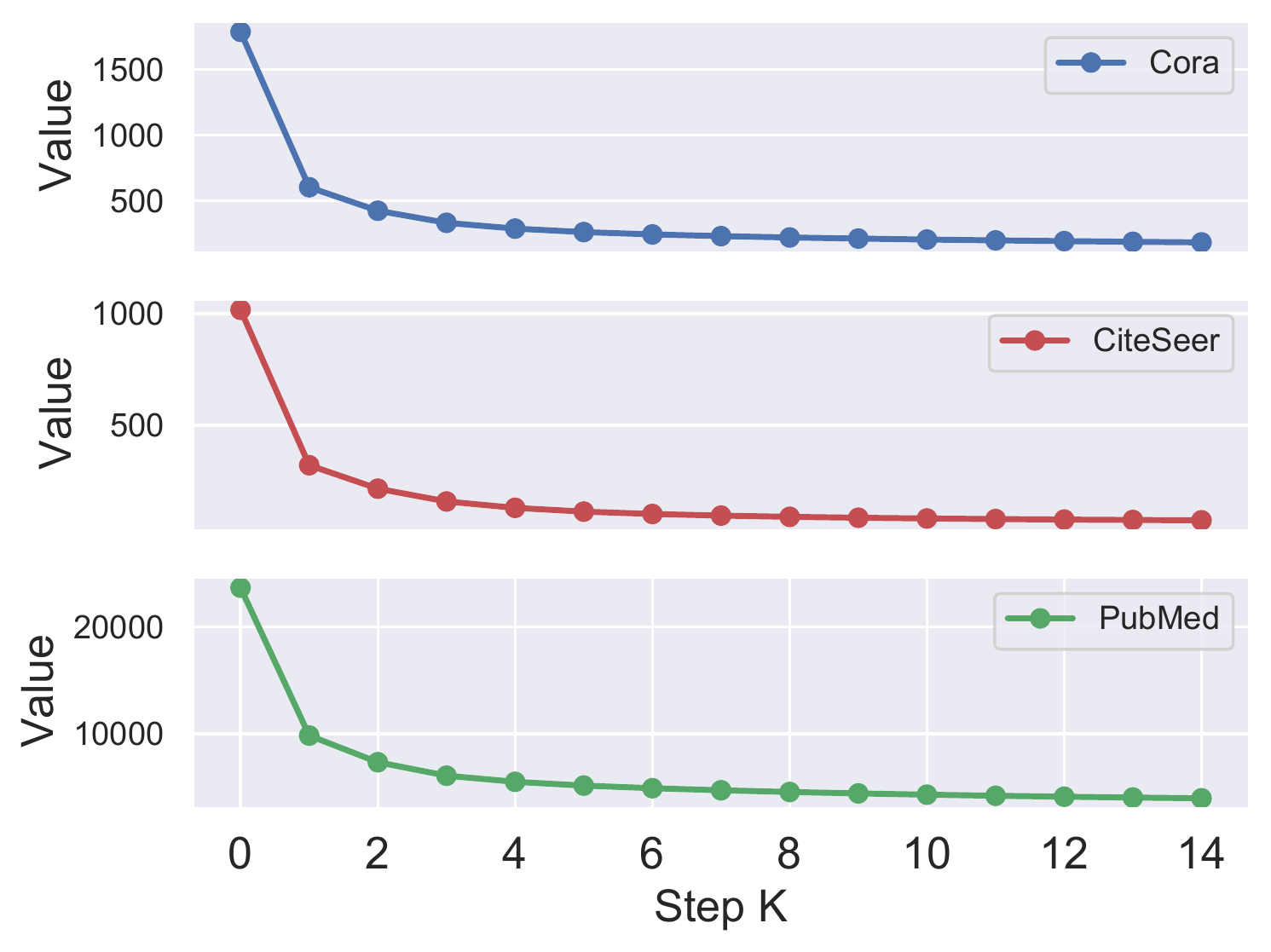}
\vspace{-0.15in}
\caption{ Convergence of the objective value for the problem in Eq.~\eqref{eq:problem_normalized_l21} during message passing.}
\label{fig:changeK_converge}
\end{figure}

\section{Related Work}

The design of GNN architectures can be majorly motivated in spectral domain~\citep{kipf2016semi, defferrard2016convolutional} and spatial domain~\citep{hamilton2017inductive, velivckovic2017graph, scarselli2008graph, gilmer2017neural}. The message passing scheme~\citep{gilmer2017neural, ma2020deep} for feature aggregation is one central component of GNNs. Recent works have proven that the message passing in GNNs can be regarded as low-pass graph filters~\citep{nt2019revisiting, zhao2019pairnorm}. Generally, it is recently proved that message passing in many GNNs can be unified in the graph signal denosing framework~\citep{ma2020unified, pan2020_unified, zhu2021interpreting, chen2020graph}. We point out that they intrinsically perform $\ell_2$-based graph smoothing and typically can be represented as linear smoothers. 

$\ell_1$-based graph signal denoising has been explored in graph trend filtering~\citep{wang2016trend, varma2019vector} which tends to provide estimators with $k$-th order piecewise polynomials over graphs. Graph total variation has also been utilized in semi-supervised learning~\citep{nie2011unsupervised, jung2016semi, semi_tv_min, aviles2019labelled}, spectral clustering~\citep{buhler2009spectral, bresson2013multiclass} and graph cut problems~\citep{szlam2010total, bresson2013adaptive}. However, it is unclear whether these algorithms can be used to design GNNs. To the best of our knowledge, we make first such investigation in this work.

\section{Conclusion}

In this work, we propose to enhance the smoothness adaptivity of GNNs via $\ell_1$ and $\ell_2$-based graph smoothing. Through the proposed elastic graph signal estimator, we derive a novel, efficient and general message passing scheme, i.e., elastic message passing (EMP). Integrating the proposed message passing scheme and deep neural networks leads to a family of GNNs, i.e., Elastic GNNs. Extensitve experiments on benchmark datasets and adversarially attacked graphs demonstrate the benefits of introducing $\ell_1$-based graph smoothing in the design of GNNs. The empirical study suggests that $\ell_1$ and $\ell_2$-based graph smoothing is complementary to each other, and the proposed Elastic GNNs has better smoothnesss adaptivity owning to the integration of $\ell_1$ and $\ell_2$-based graph smoothing. We hope the proposed elastic message passing scheme can inspire more powerful GNN architecture design in the future.

\section*{Acknowledgements}

This research is supported by the National Science Foundation (NSF) under grant numbers CNS-1815636, IIS-1928278, IIS-1714741, IIS-1845081, IIS-1907704, IIS-1955285, and Army Research Office (ARO) under grant number W911NF-21-1-0198. Ming Yan is supported by NSF grant DMS-2012439 and Facebook Faculty Research Award (Systems for ML).


\bibliography{section/ref}
\bibliographystyle{icml2021}

\appendix
\onecolumn

\begin{center}
\large{\bf{Appendix for Elastic Graph Neural Networks}}
\end{center}

\section{Data Statistics}
\label{sec:data}

The data statistics for the benchmark datasets used in Section~\ref{sec:benchmark} are summarized in Table~\ref{tab:dataset}. The data statistics for the adversarially attacked graph used in Section~\ref{sec:attack} are summarized in Table~\ref{tab:attack_dataset}.

\begin{table*}[!ht]
\caption{Statistics of benchmark datasets.} 
\vskip 1em
\centering
\resizebox{\columnwidth}{!}{
\begin{tabular}{lcccccccc}
\toprule
\textbf{Dataset} &\textbf{Classes} &\textbf{Nodes} &\textbf{Edges} &\textbf{Features} &\textbf{Training Nodes} &\textbf{Validation Nodes} &\textbf{Test Nodes} \\
\midrule
Cora &7 &2708 &5278 &1433 &20 per class &500 &1000 \\
CiteSeer &6 &3327 &4552 &3703 &20 per class &500 &1000 \\
PubMed &3 &19717 &44324 &500 &20 per class &500 &1000 \\
Coauthor CS &15 &18333 &81894  &6805 &20 per class &30 per class &Rest nodes \\
Coauthor Physics &5 &34493 &247962 &8415 &20 per class &30 per class &Rest nodes\\
Amazon Computers &10 &13381 &245778& 767 &20 per class &30 per class &Rest nodes \\
Amazon Photo &8 &7487 &119043 & 745 &20 per class &30 per class &Rest nodes \\
\bottomrule
\end{tabular}
}
\label{tab:dataset}
\end{table*}

\begin{table}[h!]
\centering
\caption{Dataset Statistics for adversarially attacked graph.}
\vskip 1em
\resizebox{0.45\columnwidth}{!}{
\begin{tabular}{c|cccc}
\toprule
         & \textbf{$\mathbf{N_{LCC}}$} & \textbf{$\mathbf{E_{LCC}}$}  & \textbf{Classes} & \textbf{Features} \\ \midrule
Cora     & 2,485 & 5,069  & 7       & 1,433     \\ 
CiteSeer & 2,110 & 3,668  & 6       & 3,703     \\ 
Polblogs & 1,222 & 16,714 & 2       & /        \\ 
PubMed	 &19,717 & 44,338 & 3       & 500 \\ \bottomrule
\end{tabular}
}
\vspace{-1em}
\label{tab:attack_dataset}
\end{table}

\section{Convergence Guarantee}

\setcounter{theorem}{0}
We provide Theorem~\ref{thm:converge} to show the convergence guarantee of the proposed elastic messsage passing scheme and the practical guidance for parameter settings in EMP.

\begin{theorem}[Convergence of EMP]
\label{thm:converge}
Under the stepsize setting $\gamma < \frac{2}{1+\lambda_2\|\tL\|_2}$ and $\beta \leq \frac{4}{3\gamma \|\tDelta \tDelta^{\top}\|_2}$, the elastic message passing scheme (EMP) in Figure~\ref{fig:mp} converges to the optimal solution of the elastic graph signal estimator defined in~\eqref{eq:problem_normalized_l1} (Option I) or ~\eqref{eq:problem_normalized_l21} (Option II). 
It is sufficient to choose any $\gamma < \frac{2}{1+2\lambda_2}$ and $\beta \leq \frac{2}{3\gamma}$ since $\|\tL\|_2 = \|\tDelta^{\top} \tDelta\|_2  = \|\tDelta \tDelta^{\top}\|_2  \leq 2$.
\end{theorem}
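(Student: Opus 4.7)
The plan is to recognize problem~\eqref{eq:problem_normalized_l1} (resp.~\eqref{eq:problem_normalized_l21}) as a composite convex problem $\min_{\vF} f(\vF) + g(\tDelta \vF)$, match EMP step-by-step with the Proximal Alternating Predictor-Corrector / Loris--Verhoeven primal-dual iteration, and invoke the existing convergence theory for that algorithm.

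First I would verify the smoothness of $f$. Since $\nabla f(\vF) = (\vF - \vX_{\text{in}}) + \lambda_2 \tL \vF$ depends linearly on $\vF$, the Lipschitz constant of $\nabla f$ equals the spectral norm of $\vI + \lambda_2 \tL$, namely $L_f = 1 + \lambda_2 \|\tL\|_2$. Then $g$ is convex, proper, and lower semicontinuous in both the $\ell_1$ and $\ell_{21}$ cases, with proximal operators already worked out in~\eqref{eq:proximal_l1} and~\eqref{eq:proximal_l21}. This places us in the setting where PAPC is directly applicable, with linear coupling operator $\tDelta$.

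Second, identifying steps~\eqref{eq:opt_step1}--\eqref{eq:opt_step3} with the PAPC template (primal predict, dual prox-ascent, primal correct), I would apply the known convergence theorem of Chen--Huang--Zhang / Loris--Verhoeven, which guarantees iterate convergence $\vF^k \to \vF^\star$ provided that (i) $\gamma < 2/L_f = 2/(1 + \lambda_2\|\tL\|_2)$, the standard descent condition for the smooth part, and (ii) $\gamma \beta \|\tDelta \tDelta^\top\|_2 \leq 4/3$, which ensures the dual step is non-expansive in the metric adapted to the splitting. Uniqueness of the primal limit follows from strong convexity of $f$.

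Finally, the ``sufficient'' part is a short spectral calculation. The symmetrically normalized adjacency $\tA = \hD^{-1/2}\hA\hD^{-1/2}$ has eigenvalues in $[-1,1]$ (standard Rayleigh quotient argument on a graph with self-loops), so $\tL = \vI - \tA$ has spectral norm at most $2$. Combined with the identity $\tL = \tDelta^\top \tDelta$ from~\eqref{eq:normalized_inc} and the fact $\|\vM^\top \vM\|_2 = \|\vM \vM^\top\|_2$, this gives $\|\tL\|_2 = \|\tDelta^\top\tDelta\|_2 = \|\tDelta\tDelta^\top\|_2 \leq 2$, converting conditions (i) and (ii) into $\gamma < 2/(1+2\lambda_2)$ and $\beta \leq 2/(3\gamma)$. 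The main obstacle is pinning down the exact constant $4/3$ in condition (ii): the more commonly cited PAPC condition is the tighter $\gamma\beta\|\tDelta\tDelta^\top\|_2 < 1$, and recovering the relaxed $4/3$ bound requires carefully reproducing the Lyapunov / weighted-metric descent analysis, showing Fejér-monotonicity of the joint $(\vF^k,\vZ^k)$-sequence in an inner product determined by $\gamma$, $\beta$, and $\tDelta$, with the slack produced by the extra correction step~\eqref{eq:opt_step3}.
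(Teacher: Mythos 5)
Your proposal is correct and follows essentially the same route as the paper: reduce to the composite problem $\min_{\vF} f(\vF)+g(\tDelta\vF)$, invoke the PAPC/Loris--Verhoeven convergence theory with $L_f=1+\lambda_2\|\tL\|_2$, and use $\tL=\tDelta^{\top}\tDelta$, $\|\tDelta\tDelta^{\top}\|_2=\|\tDelta^{\top}\tDelta\|_2$, and $\|\tL\|_2\leq 2$ to obtain the simplified stepsizes. The only point you flag as an obstacle, the relaxed constant $4/3$ in place of the classical $\gamma\beta\|\tDelta\tDelta^{\top}\|_2\leq 1$, is handled in the paper not by redoing the Lyapunov analysis but simply by citing the relaxed parameter condition established in \citep{li2017primal}.
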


\begin{proof}
\label{supp:convergence}
We first consider the general problem
\begin{align}
    \min_\vF f(\vF) + g(\vB\vF)
    \label{eq:general_prob}
\end{align}
where $f$ and $g$ are convex functions and $\vB$ is a bounded linear operator. 
It is proved in~\citep{loris2011generalization, chen2013primal} 
that the iterations in~\eqref{eq:opt_step1}--\eqref{eq:opt_step3} guarantee the convergence of $\vF^k$ to the optimal solution of the minimization problem~\eqref{eq:general_prob} if the parameters satisfy $\gamma < \frac{2}{L}$ and $\beta \leq \frac{1}{\gamma \lambda_{\max}(\vB\vB^{\top})}$, where $L$ is the Lipschitz constant of $\nabla f(\vF)$. These conditions are further relaxed to $\gamma < \frac{2}{L}$  and $\beta \leq \frac{4}{3\gamma \lambda_{\max}(\vB\vB^{\top})}$ in~\citep{li2017primal}.

For the specific problems defined in~\eqref{eq:problem_normalized_l1} and~\eqref{eq:problem_normalized_l21}, the two function components $f$ and $g$ are both convex, and the linear operator $\Delta$ is bounded. The Lipschitz constant of $\nabla f(\vF)$ can be computed by the largest eigenvalue of the Hessian matrix of $f(\vF)$: $$L=\lambda_{\max} (\nabla^2 f(\vF)) = \lambda_{\max} (\vI+\lambda_2 \tL) = 1+\lambda_2 \|\tL\|_2.$$ 
Therefore, the elastic message passing scheme derived from iterations~\eqref{eq:opt_step1}--\eqref{eq:opt_step3} is guaranteed to converge to the optimal solution of problem~\eqref{eq:problem_normalized_l1} (Option I) or problem~\eqref{eq:problem_normalized_l21} (Option II)  if the stepsizes satisfy $\gamma < \frac{2}{1+\lambda_2 \|\tL\|_2}$ and $\beta \leq \frac{4}{3\gamma \|\tDelta \tDelta^\top\|_2}$. 

Let $\tDelta=\vU\Sigma\vV^{\top}$ be the singular value decomposition of $\tDelta$, and we derive 
$$
\|\tDelta\tDelta^{\top}\|_2 = \|\vU\Sigma\vV^{\top} \vV \Sigma\vU^{\top}\|_2=\|\vU\Sigma^2\vU^{\top}\|_2=\|\vV\Sigma^2\vV^{\top}\|_2=\|\vV\Sigma\vU^{\top} \vU \Sigma\vV^{\top}\|_2 = \|\tDelta^{\top}\tDelta\|_2
.$$
The equivalence $\tL = {\tDelta}^{\top} \tDelta$ in~\eqref{eq:normalized_inc} further gives 
$$\|\tL\|_2=\|\tDelta^{\top}\tDelta\|_2=\|\tDelta\tDelta^{\top}\|_2.$$
Since $\|\tL\|_2 \leq 2$~\citep{chung1997spectral}, we have
$\frac{2}{1+2\lambda_2} \leq \frac{2}{1+\lambda_2\|\tL\|_2}$ and $\frac{2}{3\gamma} \leq \frac{4}{3\gamma \|\tDelta \tDelta^\top\|_2}$. Therefore, $\gamma < \frac{2}{1+2\lambda_2}$
$\beta \leq \frac{2}{3\gamma}$ are sufficient for the convergence of EMP.

\end{proof}

\end{document}